\DeclareMathOperator*{\argmax}{arg\,max}
\DeclareMathOperator*{\argmin}{arg\,min}
\newtheorem{theorem}{Theorem}
\icmltitlerunning{Few-shot Language Coordination by Modeling Theory of Mind}
\begin{document}

\twocolumn[
\icmltitle{Few-shot Language Coordination by Modeling Theory of Mind}



\icmlsetsymbol{equal}{*}

\begin{icmlauthorlist}
\icmlauthor{Hao Zhu}{cmu}
\icmlauthor{Graham Neubig}{cmu}
\icmlauthor{Yonatan Bisk}{cmu}
\end{icmlauthorlist}

\icmlaffiliation{cmu}{Language Technologies Institute, Carnegie Mellon University}

\icmlcorrespondingauthor{Hao Zhu}{zhuhao@cmu.edu}

\icmlkeywords{Machine Learning, ICML}

\vskip 0.3in
]



\printAffiliationsAndNotice{}  

\begin{abstract}
\emph{No man is an island.} Humans communicate with a large community by coordinating with different interlocutors within short conversations. This ability has been understudied by the research on building neural communicative agents. We study the task of few-shot \emph{language coordination}: agents quickly adapting to their conversational partners' language abilities. 
Different from current communicative agents trained with self-play, we require the lead agent to coordinate with a \emph{population} of agents with different linguistic abilities, quickly adapting to communicate with unseen agents in the population. This requires the ability to model the partner's beliefs, a vital component of human communication. 
Drawing inspiration from theory-of-mind (ToM; \citet{premack1978does}), we study the effect of the speaker explicitly modeling the listeners' mental states. The speakers, as shown in our experiments, acquire the ability to predict the reactions of their partner, which helps it generate instructions that concisely express its communicative goal.
We examine our hypothesis that the instructions generated with ToM modeling yield better communication performance in both a referential game and a language navigation task.
Positive results from our experiments hint at the importance of explicitly modeling communication as a socio-pragmatic progress. Code can be found at \url{https://github.com/CLAW-Lab/ToM}.
\end{abstract}

\section{Introduction}

Natural language is an ubiquitous communication medium between human interlocutors, and is shaped by the desire to efficiently cooperate and achieve communicative goals \cite{gibson2019efficiency}.
Because of this, there has been interest in creating artificial agents that mimic this communication process, with a wide variety of works examining communication between agents via either completely artificial \emph{emergent} language \cite{wagner2003progress, bouchacourt2018agents, li2019ease, kharitonov2020entropy}, or through \emph{natural} language such as English \cite{lazaridou2016multi,lowe2019interaction}.
In general, these methods model interaction between a pair of agents, a \emph{speaker} and a \emph{listener} that attempt to jointly achieve a goal where the language use is learned to optimize success with respect to a collaborative task (for example, in Fig. \ref{fig:demo} the speaker instructs an embodied agent to perform a task in an environment).


\begin{figure}[!t]
    \includegraphics[width=\linewidth]{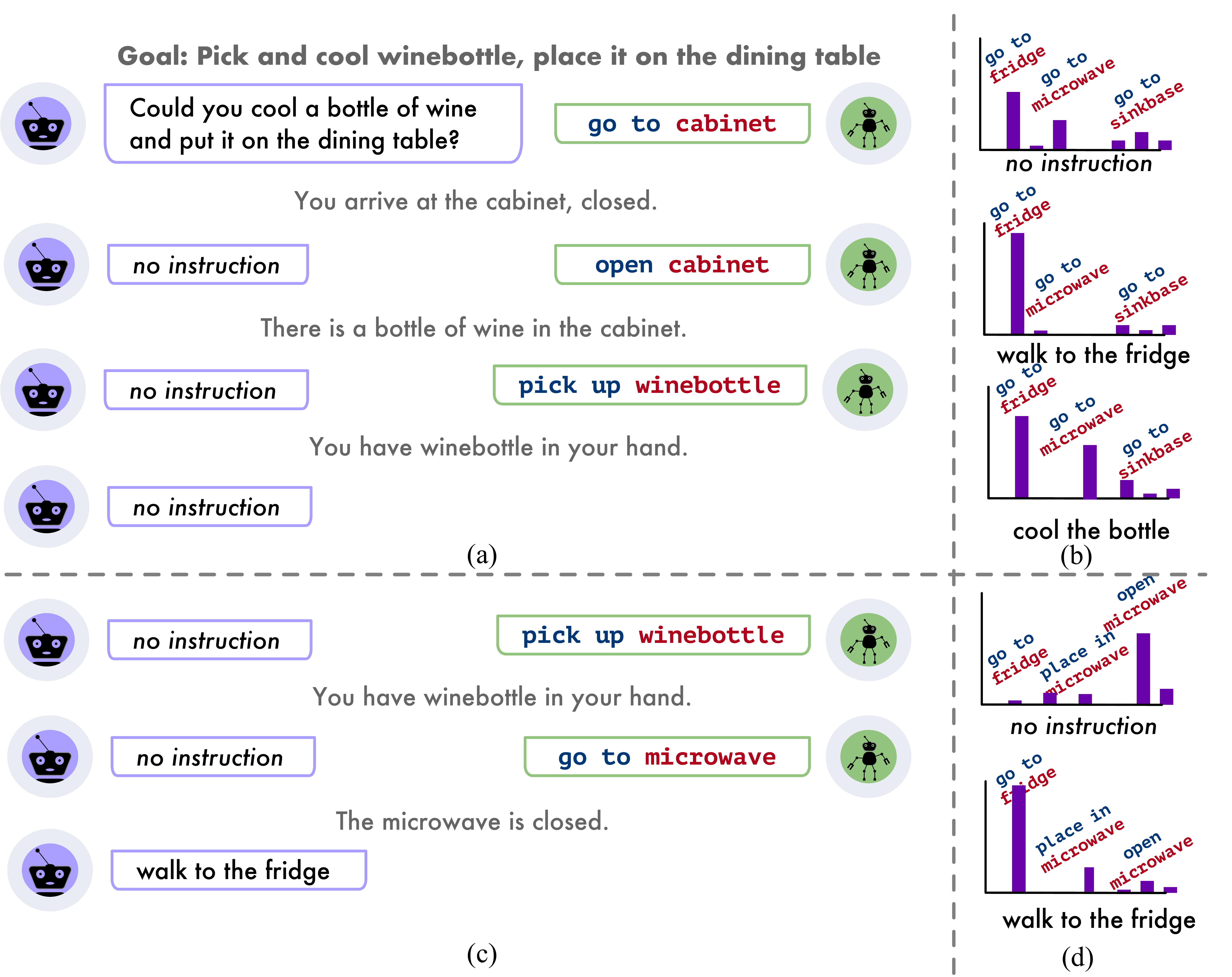}
    \caption{A conversation between a speaker and a listener collaboratively solving a navigation task. (a) At the start of the task, a goal (bold font) is given to the speaker (purple robot head). The speaker first gives a task-level instruction. Without previous knowledge of the listener, the speaker thinks the listener (green robot) will proceed to the fridge after three correct actions (monospace font) in a row. Grey observations are given by the environment after each action. (b) shows the belief of the speaker about the listener's action after a few instruction candidates. Note that to keep instructions concise the speaker chooses ``no instruction'' over ``walk to fridge'' despite the higher probability of listener taking correct action given the latter instruction.  (c) After the listener makes a mistake by going to the microwave, the speaker figures out that the listener cannot understand ``cool'' in the high-level instruction given, and gives low-level instruction ``walk to the fridge''. (d) shows the belief of speaker at this time step. Note that the probability of action ``\texttt{\color{blue}go to \color{red}fridge}'' without instruction decreases due to the wrong action of the listener.
    }
    \label{fig:demo}
\end{figure}

However, in contrast to this setup, human speakers interact with not a single listener, but many different conversational partners.
In doing so, they also adapt to each other's language within short conversations. 
One of the representative phenomena is entrainment, in which interlocutors align their language on both acoustic-prosodic and lexical dimensions during communication \cite{brennan1996conceptual, levitan2018acoustic}. 
These issues remain mostly unattested in previous work on multi-agent communication --- it remains an open question how to train an agent which can adapt to communicating with novel partners quickly. 

We define this adaptive communication problem as \emph{few-shot language coordination}. As shown in Fig. \ref{fig:demo}, within a few rounds of the game, the speaker must adapt its language based on the responses from the listener to achieve the \emph{communicative goal} of helping the listener perform the correct actions as many times as possible. This few-shot coordination setting provides the agents with the pressure to adapt on-the-fly -- something that current models generally cannot achieve, even those that model pragmatics, e.g. the rational speech act model \cite{frank2012predicting}.



Developmental psychology argues for the importance of Theory of Mind (ToM), the understanding of others’ mental states, and also the ability of interlocutors to act upon others mental states to achieve  desired effects \cite{tomasello2018children}. 
In this paper, we study the importance of modeling the beliefs of one's conversational partner to achieve few-shot language coordination.
In particular, we train a model that quickly adapts to predict the actions of the listeners in real time. At each time step, the speaker predicts the listener's likely next action given all possible instructions and the listener's previous actions.
This is essentially a few-shot learning problem, which we attack with model-agnostic meta-learning (MAML) \cite{finn2017model}.
In order to achieve the communicative goal, the speaker chooses the best instruction to give so that the probability of the listener performing the correct actions is maximised. 

We expect the resulting agent to not only mimic humans' ability to model a listener's mental state but also to leverage this estimate to choose better instructions. Through empirical evaluation, we aim to answer the question:

\begin{quote}
\it
    Can an agent, equipped with a model of theory of mind, quickly adapt to a listener's language in a few-shot language coordination game?
\end{quote}

Our experiments answer in the affirmative in the both a referential game and a vision-language navigation setting.

\section{Few-shot Language Coordination}
Consider again the example depicted in Fig.~\ref{fig:demo}.
A speaker model observes the goal of the current task (``pick and cool wine bottle, place it on the dining table''), and sends messages to a listener model, in an attempt to help it finish the task.
The listener model chooses actions to take given the latest and previous instructions.
If the listener makes a mistake, such as going to the microwave instead of the refrigerator, the speaker should realize that the listener misunderstood some aspect of the high-level instructions in the first message. This misstep informs the speaker's model of the listener's ability,  leading them to give lower-level instructions to help correct the trajectory. Through \emph{several games}, the speaker gathers enough data to provide customized messages for individual listeners.
This is only possible if a proper prior is provided to the speaker. The simplest prior can be hand-coded rules, e.g. if listener cannot understand abstract instructions, try simpler ones. 
However, to pursue a more general and powerful communication model (e.g. knowing when to simplify vs. rephrase),
we study whether this kind of few-shot language coordination can be \emph{learned} by playing with a \emph{population} of listeners. This section proposes a method to construct meaningful populations. 

\subsection{Asymmetric Speaker-Listener Games}


Following previous work on communicative agents \cite{lazaridou2016multi,cao2018emergent,lowe2019interaction}, we use goal-oriented language games 
as the test bed for few-shot language coordination. A general goal-oriented language game provides an environment where the participants uses language to communicate with each other to achieve the given goal.
We consider the most basic setting of a simplified two-player shared-goal multi-round setup:
\begin{description}[align=left]
    \item[Environment:] The environment is defined by $\mathcal{O}$bservation space, $\mathcal{A}$ction space, $\mathcal{G}$oal space and transition function $E:\mathcal{O}\times \mathcal{A}\rightarrow\mathcal{O}\times \mathcal{G}$. At the start of each game, the environment provides the speaker with a goal and both participants with observations after each action is taken by the listener. A new game starts after the previous one succeeds or reaches a maximum number of steps. 
    \vspace{-2pt}
	\item[Participants:] The participants consist of a speaker and a listener sending and receiving natural language messages. After observing the goal, the speaker gives an instruction to the listener, and the listener performs an action in the environment. If the game is sequential, the speaker can also give an instruction after each action until the game is solved or the maximum number of steps is reached. The speaker is a message-and-action producing model defined by the \emph{vocabulary} $\Sigma$; the space of \emph{observations} $\mathcal{O}$; the space of \emph{actions} $\mathcal{A}$; and a \emph{model} $f:\mathcal{O}\times\mathcal{G}\rightarrow\Sigma^*\times \mathcal{A}$. The listener is an instruction-follower defined by the same vocabulary $\Sigma$, observation space $\mathcal{O}$, and space of \emph{actions} $\mathcal{A}$ as the speaker; and a \emph{model} $g: \Sigma^* \times \mathcal{O} \rightarrow \mathcal{A}$. 
	\item[Multi-round Games:] The pair of participants will play a \emph{session} of $N$ rounds of games, which are sampled independently. Different from single-round games ($N\!=\!1$) used in most previous work \cite{lazaridou2016multi, cao2018emergent, fried2018unified, lowe2019interaction}, the participants keep the memory of past games in the same session. Multi-round games are not only more general than 
	single-round games,
	but are essential to few-shot language coordination, because participants have the opportunity to adapt to the interlocutors by learning from feedback during previous rounds. 
\end{description}

Note that the listeners in this setting cannot directly observe the goal, so the speakers need to use instructions to inform the listeners about the final or intermediate goals of each game. Within $N$ rounds, the speaker needs to adapt to the listener's knowledge to provide the most effective instructions.

\begin{figure*}[!t]
\begin{small}
\begin{tcolorbox}[left=2pt,right=2pt,top=2pt,bottom=2pt]
\textbf{Training Theory-of-Mind Model for Few-shot Language coordination} 

\vspace{5pt}

\begin{tabular}{@{}l@{\hspace{10pt}}ll}
	Given & \textbullet\ $N$ training listeners & $L = \{l_i\}_{i=0}^{N-1}\in \mathcal{O}\times\mathcal{I}\rightarrow \mathcal{A}  \text{ }(i=0, 1, \dots, N-1)$ sampled from $\mathcal{D}_{\text{listener}}$\\
	& \textbullet\  Language game environment & $E$: $\mathcal{O}\times\mathcal{A}\rightarrow \mathcal{O}$\\
	& \textbullet\  Speaker & $S$: $\mathcal{O}\times\mathcal{G}\rightarrow\mathcal{I}^+\times\mathcal{A}$\\
	& \textbullet\  Message cost function & $C$: $\mathcal{I} \rightarrow \mathbb{R}$\\
	& \textbullet\ Constants & cost coefficient $\kappa\in\mathbb{R}$, distribution coefficient $\sigma\in[0, 1]$,\\
	&&maximum number of interactions $K \in \mathbb{N}$
\end{tabular}

\setlength{\abovedisplayskip}{0pt}
\setlength{\belowdisplayskip}{0pt}
\setlength{\abovedisplayshortskip}{0pt}
\setlength{\belowdisplayshortskip}{0pt}
While not converged: 
\vspace{-5pt}
\begin{enumerate}
\item[1.] Define dataset $\mathcal{D}_{\theta_{\text{mind}}}(l_i) = \{(o_j, m_j, a_j)\}$ for each training listener $l_i$ and game. For a given game, the goal is $g$; the first observation is $o_1$; the message and action are\\[-5pt]
\begin{align}
     M, a_j^g &= S(o_j, g)\\
     \mathcal{Q}(M)\ &=\underset{m\in M}{\text{normalize}}(\mathcal{P}_{\text{ToM}}(a_j^g \mid o_j, m, \{(o_k, m_k, a_k)_{k=1}^{j-1}\}; \theta_{\text{mind}})\exp(-\kappa C(m))) \quad \label{eq:instruction_distribution}  \\
     m_j &\sim \sigma \mathcal{Q}(M) + (1-\sigma)\mathcal{U}(M) \quad a_j = l_i(o_j, m_j) \quad o_{j+1} = E(o_j, a_j)
\end{align}

where $a_j^g$ is the planned action of the speaker; $\underset{m\in M}{\text{normalize}}$ represents normalizing unnormalized probabilities.\\[-15pt]
\item[2.] Compute prediction loss \\
\vspace{-5pt}
\begin{align}
    \mathcal{L}^{\text{pred}}(\mathcal{D}_{\theta_{\text{mind}}}) = -\mathbb{E}_{i\sim \mathcal{U}([N]),k \sim \mathcal{U}([K]), \mathcal{D}_{\text{supp}} \sim \mathcal{U} (\mathcal{D}_{\theta_{\text{mind}}}^k(l_i)), (o, m, a) \sim \mathcal{U}(\mathcal{D}_{\theta_{\text{mind}}}(l_i))} \log \mathcal{P}_{\text{ToM}}(a \mid o, m, \mathcal{D}_{\text{supp}};\theta_{\text{mind}})
\end{align}

where $i$ is the index of the listener, $k$ is the size of the support set which are uniformly sampled from $\{0, 1, \dots, N-1\}$ and $\{0, 1, \dots, K-1\}$, the support set $\mathcal{D}_{\text{supp}}$ and target sample $(o, m, a)$ are sampled from $\mathcal{D}_{\theta_{\text{mind}}}$ uniformly.
\item[3.] Update the ToM parameters:  $ \theta_{\text{mind}} \leftarrow \argmin_{\theta} \mathcal{L}^{\text{pred}}(\mathcal{D}_{\theta_{\text{mind}}})
$
\end{enumerate}

\end{tcolorbox}
\end{small}
\renewcommand{\figurename}{Procedure}
\captionof{procedure}{General Theory-of-Mind (ToM) model training procedure.}
\label{proc:training_tom}
\end{figure*}

\subsection{Population}
\citet{rabinowitz2018machine} coined the notion of ``machine ToM'', which is a model for tracking agents' behaviors. To train and evaluate its ability to adapt to different agents, they create populations of subject agents by using different neural architectures and random seeds for parameter initialization. 

Our design of populations draws inspiration from their work. However, to make the similarities and differences between agents controllable, we consider a population as a distribution over parameters neural listeners with the same architecture which have been trained on different datasets. A neural listener $f_{\theta}^L: \mathcal{O}\times\mathcal{I}\rightarrow\mathcal{A}$ is a mapping from $\mathcal{O}$bservations and $\mathcal{I}$nstructions to $\mathcal{A}$ctions with parameter $\theta$ of the neural networks. The parameters trained on dataset $\mathcal{D}$ are:
\begin{equation}
    \theta_{\mathcal{D}} = \arg\min_{\theta} \mathcal{L}(f_L, \mathcal{D})
\end{equation}


In this way, the variation over listeners is mainly determined by the features of the dataset. By constructing datasets with different feature distributions, we control the listeners' language abilities.

In the example of Fig.~\ref{fig:demo}, the observations in language games are the items in the visual field, and the agent may perform any number of actions (e.g. ``\texttt{\color{blue}go to \color{red}cabinet}'').
The speaker may provide natural language instructions, which can range from high-level (e.g.~``cool the winebottle'') to low-level  (e.g.~``get the wine bottle'', ``take it to the refrigerator'', ``put it in the refrigerator''). 
A listener that has never been trained on a particular variety of (usually high-level) instruction would have trouble performing the appropriate actions.
This leads to an exponential population of listeners that are trained on datasets containing, or not containing, particular relevant instructions.
Because of this, in order to effectively and concisely communicate, an effective speaker will have to judge the language abilities of its various partners in the population and adjust appropriately; we explain how we do so in the following section. 


\subsection{Theory-of-mind Model}


Theory-of-mind, the ability to build a model of one's conversational partners, is deemed to be crucial in socio-pragmatics theory \cite{premack1978does,tomasello2018children}.
Drawing inspiration from this, we build a theory-of-mind model to learn to mimic the behavior of the listener within a short time window. 

\paragraph{Mental State} Modeling mental states is the central concept in building a theory-of-mind.
We define the mental state of the listener as the parameters of a neural model, the \emph{ToM model},
that produces the same output for the same inputs as the listener: $\forall x\in \Sigma^*, o \in \mathcal{O}, g_{\text{ToM}}(x, o;\theta_{\text{mind}}) \approx g(x, o;\theta)$. It should be noted that in the general case, particularly when different model architectures are used to represent the model itself and the ToM model, the mental state representations may not be unique or even exist.
In other words, for any model $\theta$ there may be more than one parameter setting $\theta_{\text{mind}}$ that satisfies this condition, or there may be no $\theta_{\text{mind}}$ that produces the exact same output.

\paragraph{Building a Theory-of-mind}
Learning a ToM model is reduced to 
inferring the mental state of the listener. For a given listener $g$ with parameters $\theta$ and ToM model $g_{\text{ToM}}$, we seek a mental state representation $\theta_{\text{mind}}$. In practice, we use identical neural architectures for both the listener and ToM Model. However, inferring the exact mental state is infeasible within few interactions. Therefore, we estimate $g_{\text{ToM}}$ such that
\begin{equation}
    \theta_{\text{mind}} = \arg\min_{\theta'} \mathbb{E}_{o, m}\mathcal{L} (g_{\text{ToM}}(o, m;\theta'), g(o, m;\theta))
\end{equation}

It is straightforward to apply this definition of mental state in the psychological context for which it was originally proposed.
The mental state $\theta_{\text{mind}}$ is the representation of the listener's language abilities, which are not directly observable, and which are ultimately used for predicting the belief and behavior of the speaker \cite{premack1978does}. 
For example, in our first set of experiments we focus on referential game where the speaker describes the target in order to let the listener pick it out from distractors. We construct a population in which neural listeners with LSTMs and word embeddings have different language comprehension abilities for different languages.
One of the possible representations controls the word embeddings in different languages: the mental state of a good language listener should have more meaningful word embeddings, while the one which cannot understand the language should have more random ones. Given that the speaker can acquire an accurate mental state for the listener, it can be used for predicting the probability of listener choosing the correct image when hearing descriptions in different languages. By choosing the one that yields the correct image with the highest probability, the speaker generates the descriptions which improve the referential game. On the other hand, high quality descriptions help the speaker better narrow down the language abilities of the listener. This is similar to the two-way interrelation between language and ToM in humans \cite{de2007interface}. 

Following this direction, we present a dynamic view of ToM by putting the observer inside the conversation, instead of the static view of \citet{rabinowitz2018machine}, which uses ToM for tracking the behavior of the agent without interfering in the games. Our training procedure is presented in Proc.~\ref{proc:training_tom}.  
We aggregate a dataset $D_{\theta_{\text{mind}}}$ at each epoch, and update the parameters by optimizing the ToM model on the dataset. To aggregate the dataset for each training listener, we randomly sample from the posteriors of the ToM model and uniform distributions over the candidates, which keeps a certain degree of exploration, modulated by distribution coefficient $\sigma$ (through the paper, we use $\sigma = 0.5$). In practice, parameters are updated with stochastic gradient descent by sampling listeners and using the history of each listener at each time step as a support set for predicting the next actions of the listener. Following the literature on speech acts, e.g.~\citet{monroe2015learning}, we also add exponential cost penalty $\exp(-\kappa C(m))$ as a prior to penalize long instructions. (We have not explored the space of penalty functions in this paper, but the exponential function is widely used in the pragmatics literature, e.g. \cite{monroe2015learning}, \cite{morris2019pressure}.) In Fig.~\ref{fig:demo} (a\&b), although ``go to fridge'' yields the highest probability of gold action, no instruction is given in order to express the goal concisely. 

Similarly to the imitation learning algorithm DAgger \cite{ross2011reduction}, the dataset is collected using expert actions. However, there is a major difference between Proc. \ref{proc:training_tom} and DAgger --- we optimize the prediction of actions conditioned on the observations and instructions instead of the instruction probability directly.
The following theorem shows that our model will improve the instruction generation quality:

\begin{theorem}[informal] 
\label{th:1}
Given a small enough distribution coefficient $\sigma$ and good enough bounded candidate pools, the instruction distribution produced by the ToM model becomes optimal as prediction loss goes to zero.  
\end{theorem}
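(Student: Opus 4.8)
The plan is first to pin down what ``optimal'' should mean and then to reduce the theorem to a coverage-plus-continuity argument. For a candidate pool $M = S(o_j, g)$ at step $j$, I would define the \emph{oracle} instruction distribution
\[
\mathcal{Q}^*(M) = \underset{m\in M}{\text{normalize}}\bigl(\mathbb{P}_{g}(a_j^g \mid o_j, m, \{(o_k,m_k,a_k)\}_{k=1}^{j-1})\,\exp(-\kappa C(m))\bigr),
\]
i.e.\ the distribution the speaker would compute if the ToM model were replaced by the ground-truth listener $g$. Since the listeners are deterministic maps, $\mathbb{P}_g(a_j^g\mid\cdot)$ is an indicator, so $\mathcal{Q}^*$ places its mass exactly on the instructions that elicit the correct action $a_j^g$, reweighted to prefer cheaper ones; declaring it optimal is then a definition. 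The theorem becomes the quantitative claim that $\mathcal{Q}(M) \to \mathcal{Q}^*(M)$ (say, in total variation) as $\mathcal{L}^{\text{pred}}(\mathcal{D}_{\theta_{\text{mind}}}) \to 0$. Note that the object of interest is $\mathcal{Q}(M)$ from Eq.~\eqref{eq:instruction_distribution}, \emph{not} the behaviour mixture $\sigma\mathcal{Q}(M)+(1-\sigma)\mathcal{U}(M)$, which serves only to collect data.

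Second I would connect the loss to pointwise accuracy. Because the listeners are deterministic maps $g:\mathcal{O}\times\mathcal{I}\to\mathcal{A}$, the target action distribution is a point mass, the infimum of the cross-entropy is zero, and $\mathcal{L}^{\text{pred}}\to 0$ is precisely the statement that $\mathcal{P}_{\text{ToM}}(\cdot\mid o,m,\mathcal{D}_{\text{supp}})$ concentrates on the true action $g(o,m)$ almost everywhere with respect to the data-collection measure. The key is to upgrade this almost-everywhere statement to one holding for \emph{every} candidate $m\in M$, and this is exactly where the hypotheses enter: since the behaviour policy is the mixture $\sigma\mathcal{Q}(M)+(1-\sigma)\mathcal{U}(M)$, a bounded pool $|M|\le B$ gives every candidate sampling mass at least $(1-\sigma)/B$. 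Choosing $\sigma$ bounded away from $1$ (``small enough'') keeps this coverage constant uniformly positive, so the nonnegative expectation-form loss dominates a uniformly weighted sum of the per-candidate losses over the whole pool; hence $\mathcal{L}^{\text{pred}}\to 0$ forces $\mathcal{P}_{\text{ToM}}(g(o,m)\mid o,m,\mathcal{D}_{\text{supp}})\to 1$ uniformly over $m\in M$.

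Third I would push this uniform convergence through the normalize map. On a bounded pool the unnormalized scores $\mathcal{P}_{\text{ToM}}(a_j^g\mid o_j,m,\mathcal{D}_{\text{supp}})\exp(-\kappa C(m))$ lie in a compact set and, since the cost factors $\exp(-\kappa C(m))$ are fixed positive constants, normalization is Lipschitz in the vector of per-candidate ToM probabilities away from the all-zero score. The ``good enough'' pool assumption rules out that degenerate case by guaranteeing at least one candidate eliciting the gold action, so the oracle denominator is positive and uniform convergence of the scores transfers to $\mathcal{Q}(M)\to\mathcal{Q}^*(M)$, establishing the claim.

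The step I expect to be the main obstacle is the coverage argument combined with the DAgger-style nonstationarity: the dataset $\mathcal{D}_{\theta_{\text{mind}}}$ over which the loss is measured is itself aggregated using the current ToM model, so the sampling measure and the learned object move together, and a naive bound could be circular. I would break the circularity precisely through the $(1-\sigma)\mathcal{U}(M)$ term, whose coverage floor $(1-\sigma)/B$ is independent of $\theta_{\text{mind}}$; this decoupling is the real reason the theorem insists on ``$\sigma$ small enough.'' A secondary subtlety is uniformity over support sets: the loss averages over $k\sim\mathcal{U}([K])$ and over $\mathcal{D}_{\text{supp}}$, so the pointwise guarantee must survive marginalization over every support size $k\le K$ and over the multi-round history. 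The bounded horizon $K$ and bounded pool keep the number of conditioning contexts finite, which is what makes the uniform-in-$k$ statement go through.
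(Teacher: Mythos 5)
Your argument is correct in outline and rests on the same two pillars as the paper's own proof: (i) the $(1-\sigma)\,\mathcal{U}(M)$ exploration term gives every candidate a sampling floor independent of $\theta_{\text{mind}}$ --- this is exactly how the paper introduces its factor $2(1-\sigma)$, and it is also the paper's implicit answer to the DAgger-style circularity you flag --- and (ii) a lower bound on the normalizer coming from the pool assumption (the paper formalizes ``good enough'' as $\sum_{m\in M}\mathcal{P}_{\text{ToM}}(a^g\mid m,s;\theta)\ge\delta$, which plays the role of your positive oracle denominator). The execution, however, differs in a way worth noting. The paper proves a quantitative statement that allows stochastic listeners: writing $\mathcal{L}^{\text{pred}}$ as an expected KL divergence bounded by $\epsilon$, it applies Pinsker's inequality plus Jensen to get $\mathbb{E}\,|\Delta(s,m)|\le\sqrt{\epsilon/(2(1-\sigma))}$ for the per-candidate gold-action probability gaps, then decomposes the KL divergence between the two normalized instruction distributions into a log-normalizer-ratio term and a weighted log-ratio term, bounding them by $N_M\sqrt{\epsilon/(2(1-\sigma))}/\delta$ and $W_0(\epsilon)/\delta$ respectively (Lambert's $W$ function). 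You instead assume the listeners are deterministic --- so the cross-entropy infimum is zero and vanishing loss forces the ToM scores to converge to indicators --- and push that uniform convergence through the normalize map by Lipschitz continuity. Your route is more elementary and does prove the informal limit statement, but it is strictly less general: for a stochastic listener (the paper's listeners output softmax distributions, and its formal theorem treats $P_{l_i}(a\mid o,m)$ as a genuine distribution) the raw prediction loss cannot go to zero, and your argument only survives after reinterpreting the loss as the KL term, which is precisely the paper's move. It also yields no rate, whereas the paper's bound makes explicit how the guarantee degrades with pool size $N_M$, exploration coefficient $\sigma$, and pool quality $\delta$. Two cosmetic mismatches: you keep the cost factor $\exp(-\kappa C(m))$ in both distributions while the paper's formal theorem drops it from both sides (harmless, since it is the same fixed positive reweighting on each side), and you state the pool condition on the true listener while the paper states it on the ToM probabilities.
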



\paragraph{Discussion} The conditions of Theorem \ref{th:1} mean that the speaker model $S$ must be a well-trained model to produce good enough candidates pools. In practice, this condition is not hard to meet: 
for instance, in our language navigation experiment the listeners can at least understand the lowest-level instructions, and the speaker generates four levels of instructions by rule-based experts. Therefore, the practical implication of this theorem is helpful -- our method reduces to DAgger without expert instructions. Different from DAgger, our training method doesn't directly optimize the instruction distribution against expert's instructions, but optimizes the action prediction loss instead, which upper-bounds the instruction loss.

\subsection{Meta-learning ToM Model}
To acquire an estimate of the mental state from very few interactions, the ToM model needs to quickly learn from a small support set. In theory, any model for parameterizing $\mathcal{P}_{\text{ToM}}(a \mid o, m, \mathcal{D}_{\text{supp}};\theta_{\text{mind}})$ could work in our framework.
As a general method applicable to all ToM models, we apply model-agnostic meta-learning (MAML; \citet{finn2017model}), a method that explicitly trains models to be easily adaptable in few-shot settings. Given support dataset $\mathcal{D}_{\text{supp}}$, the inner loop updates parameters for $N_{\text{inner}}$ steps:
\begin{equation}
\begin{aligned}
    \theta^{(0)} &= \theta_{\text{mind}} \\
    \theta^{(i+1)} &= \theta^{(i)} - \eta \nabla_{\theta} \mathcal{L}(\mathcal{D}_{\text{supp}}; \theta)|_{\theta=\theta^{(i)}}
\end{aligned}
\end{equation}
where $\eta$ is the inner loop learning rate, which, in practice, is not share across modules following \citet{antoniou2018train}. The loss function on the support set is the negative log-likelihood of listener's action given observations $o$ and the instructions $m$
\begin{equation}
    \mathcal{L}(\mathcal{D}_{\text{supp}}; \theta) = - \mathbb{E}_{(o, m, a)\sim\mathcal{D}_{\text{supp}}} \log p_{\theta}(a\mid o, m)
\end{equation}

After $N_{\text{inner}}$ steps, we get the prediction on the target observation $o$ and instruction $m$ 
\begin{equation}
    \mathcal{P}_{\text{ToM}}(a \mid o, m, \mathcal{D}_{\text{supp}};\theta_{\text{mind}}) = p_{\theta^{N_{\text{inner}}}}(a\mid o, m)
\end{equation}

Outer loop optimize $\theta_{\text{mind}}$ by mini-batch stochastic gradient descent
\begin{equation}
    \theta_{\text{mind}} \leftarrow \theta_{\text{mind}} - \eta_{\text{outer}} \nabla_{\theta} \mathcal{L}^{\text{pred}} (\mathcal{D}_{\theta_{\text{mind}}})
\end{equation}

The outer loop also runs for a given $N_{\text{outer}}$ epochs. 

\subsection{Deploying the ToM Model}
Similar to Proc.~\ref{proc:training_tom}, we evaluate ToM by using it to measure the probability of the gold action given the instructions. However, here we choose the best one instead of sampling from the posterior. Alg.~\ref{proc:test_tom} shows the evaluation procedure. 
\begin{algorithm} 

\small 
\caption{Evaluate ToM Model} 
\label{proc:test_tom} 
\begin{algorithmic}
 \REQUIRE Testing Listeners $L_{test}$, $E, S, C, \kappa, K$ as in Proc. \ref{proc:training_tom}
 \STATE $pt \leftarrow 0$
\FORALL{$l_i$ in $L_{test}$}
    \STATE $\mathcal{D}\leftarrow \emptyset$
    \STATE $o, a \leftarrow$ \textsc{Restart}
    \FOR{j in 1..K}
        \STATE{$o, g\leftarrow E(o, a)$}
        \IF{\textsc{Done}}
        \STATE $pt \leftarrow pt + \bm{1}[\textsc{Success}]$
        \STATE $o, g\leftarrow E(\textsc{Restart})$
        \ENDIF
        \STATE $M, a^g\leftarrow S(o, g)$
        \STATE $m \leftarrow \argmax_m \mathcal{P}_{\text{ToM}}(a^g \mid o, m, \mathcal{D}\}; \theta_{\text{mind}})e^{-\kappa C(m)}$
        \STATE $a \leftarrow l_i(o, m)$
        \STATE $\mathcal{D} \leftarrow \mathcal{D} \cup \{(o, m, a)\}$
    \ENDFOR
    
\ENDFOR
\STATE \textbf{Return} $pt$
\end{algorithmic}
\end{algorithm}

\subsection{Connection with Other Pragmatics Models}
It should be noted that using a listener model to help choose best utterance has been studied for almost a decade under the rational speech act model (RSA, \citet{frank2012predicting}; including recent more general models, e.g. \citet{Wang2020AMT}), a Bayesian framework that takes listener's choices in to account by 
\begin{equation}
\begin{aligned}
    P_{S^{n}} (m\mid a, o) &= \frac{P_{L^{n-1}}(a\mid m, o) P (m\mid o) }{\sum_{m'\in M}P_{L^{n-1}}(a\mid m', o) P(m'\mid o)}\\
    P_{L^{n}} (a\mid m, o) &= \frac{P_{S^{n-1}}(m\mid a, o) P (a\mid o) }{\sum_{a'\in A}P_{L^{n-1}}(m\mid a', o) P(a'\mid o)}
\end{aligned}
\end{equation}
where $S^n$ denotes the $n$-level speaker and $L^{n-1}$ denotes the $(n-1)$-level listener, $M, A, o$ denotes the space of instructions, actions, and the observation shared by the speaker and listener respectively, $P(m\mid o)$ and $P(a\mid o)$ are the priors over instructions and actions. The base speaker $S^0$ and listener $L^0$ are often parameterized using neural networks directly \cite{fried2018unified}.  

As a general framework for computational pragmatics, RSA models both language production and language comprehension in a recursive fashion, although the first and the second levels are predominantly used. In this paper, we focus on language production, while improving the listeners with more layers of reasoning is left for future work. 

However, the most notable difference between our model and neural RSAs is the notion of few-shot coordination. RSA base speaker and listener models are often fixed after training, making them unable to adapt to new partners during testing. While our model has a similar formulation (Eq. \ref{eq:instruction_distribution}) to the first level speaker of RSA, our ToM listener's action probability conditions on the listener's previous behavior. 


\section{Multilingual Referential Games}

We test the ability of the proposed ToM model to perform few-shot language coordination in two settings: the running example of vision-language navigation, and also in a simpler setting of \emph{referential games}, which we discuss first in this section.
In a referential game, the speaker gives a description for the target image as its instruction, and the listener's action is to choose the target from distractors, after which the listener either wins the game and gets one point or loses it. 

Following \citet{lazaridou2016multi, lowe2019interaction}, we use 30k image-caption pairs from MSCOCO dataset \cite{lin2014microsoft}.
In each game, a target image sampled from the dataset uniformly, and nine distractors are sampled from 1,000 nearest images in terms of cosine similarity of outputs of second last layer of pretrained ResNet \cite{he2016deep}. 
In contrast to previous work, which mainly deals with a pair of one speaker and one listener, we are interested in learning with a population of listeners.
In order to achieve this, we propose a setting of \emph{multilingual} referential games, where each listener has the ability to understand different languages at different levels of ability.

\paragraph{Listener distribution}
We first translate MSCOCO captions into nine languages, German, Lithuanian, Chinese, Italian, French, Portuguese, Spanish, Japanese and Greek, from English, using Google Translate\footnote{\url{https://translate.google.com}}. For each listener, we sample a vocabulary distribution $v_1,v_2,\dots,v_{10}$ from 10-dimensional Dirichlet distribution $Dir(0.5, 0.5, \dots, 0.5)$. The listener's vocabulary is built up with 5,000 words, where for each language $i$ we select the most frequent $5,000*v_i$ words in MSCOCO captions in that language to be added to the listener's vocabulary. The reason behind this design is cognitively motivated; word frequency has high correlation with age of acquisition (AoA) of words \cite{juhasz2005age}.
The dataset used to train the listener is finally created by filtering out sentences with more than one word outside the vocabulary. Given target image $x^*$, instruction $m$, and distractors $x_{i}, i=1, 2, \dots, 9$, the listener computes
\begin{equation}
\label{eq:listener}
\begin{aligned}
    z_i &= \mathtt{ResNet}(x_i)\quad\text{for }i=1, 2, \dots, 9, *\\
    z &= \mathtt{LSTM}(m)\\
    \hat{y} &= \mathtt{softmax}(z^{\top} \{z_1, z_2,\dots,z_9, z^*\})
\end{aligned}
\end{equation}
The listener is trained to minimize the expected negative log-likelihood $-\log \hat{y}^*$ by stochastic gradient descent.

Following \citet{lowe2019interaction},
we train the listeners by randomly\footnote{We have also tried other schemes in their paper, but those do not yield significantly better performance.} interleaving between self-play (training with a companion speaker) and supervised training (with MSCOCO annotations or their translations). The companion speaker takes the representation of the target image as input: 
\begin{equation}
\label{eq:speaker}
\begin{aligned}
    z^* &= \mathtt{ResNet}(x^*)\\
    l &= \mathtt{teacher\text{-}forcing} (\mathtt{LSTM}(z^*), m) \\
    \hat{m} &= \mathtt{gumbel\text{-}softmax}(\mathtt{LSTM}(z^*))
\end{aligned}
\end{equation}
During supervised training, the model is trained to minimize the teacher-forcing NLL loss, while during self-play the sampled instruction is fed to the listener with Gumbel-softmax \cite{jang2016categorical}. This procedure produces 120 listeners, for which the average success rate with MSCOCO captions within the listener's vocabulary is 81.6\% and the average success rate with companion speakers is 83.3\%. These listeners are randomly divided into training, validation, and testing listeners (80/20/20). 

\paragraph{Speaker training} Using the setup in Eqs.~\ref{eq:listener} and \ref{eq:speaker}, we equip the speaker with a vocabulary of 20K words equally distributed in ten languages. We use the same data filtering method and training scheme as described above. To produce a pool of candidates in all languages, we add a language marker at the front of each training caption, so that the languages of instructions are controllable. Using beam search (size of 10), we generate five instructions per language (i.e.~$N_M\!\!=\!50$). The speaker achieves an 87\% success rate with the listeners used to train the speaker
and a caption PPL of 23.7. 

\paragraph{ToM Model} The ToM models uses the same architecture as Eq.~\ref{eq:listener}. We use penalty $\kappa=0$.  In the referential game, the action space $\mathcal{A}=\{1, 2, \dots, 9, *\}$ and observation $o = (x_1, x_2, \dots, x_9, x^*)$, we have
\begin{equation}
    p_{\theta}(a\mid o, m) = \hat{y}_{a}.
\end{equation} 
The MAML hyper-parameters are $\eta=0.01, N_{\text{inner}} = 5, \eta_{\text{outer}} = 0.0001, N_{\text{outer}}=500$, and batch size is 2. 
\vspace{-2pt}
\paragraph{Evaluation} We evaluate the ToM-assisted speaker and other baselines with the same set of testing listeners. For each pair of speaker and listener, we calculate the average success rate of 500 $K=20$-game sessions. 
\begin{table}[!h]
\centering
\small
\begin{tabular}{lr}\\\toprule  
Model & Ave success\\\midrule
Gold-standard speaker &91.20\%\\ \hline\hline
Non-ToM speaker &37.38\%\\  \hline
RSA speaker & 42.83\%\\ \hline
ToM-assisted speaker &\textbf{58.19}\%\\  \bottomrule
\end{tabular}
\caption{Models and their respective referential game accuracy.}\label{wrap-tab:referential-game}
\end{table}

The gold-standard speaker denotes the success rate of using the testing listener in place of the ToM listener. The score of over $90\%$ indicates that the candidate pool is of high quality, so a speaker with a well-modeled ToM listener has ample room for achieving high accuracy. The non-ToM speaker uses the instruction with the highest probability in the speaker model; the RSA speaker uses the listener for training the speaker in place of the ToM listener. Our model achieves a significantly higher success rate, demonstrating that the ToM model could help produce better instructions for this referential game.

\begin{figure}
    \centering
    \includegraphics[width=0.7\linewidth]{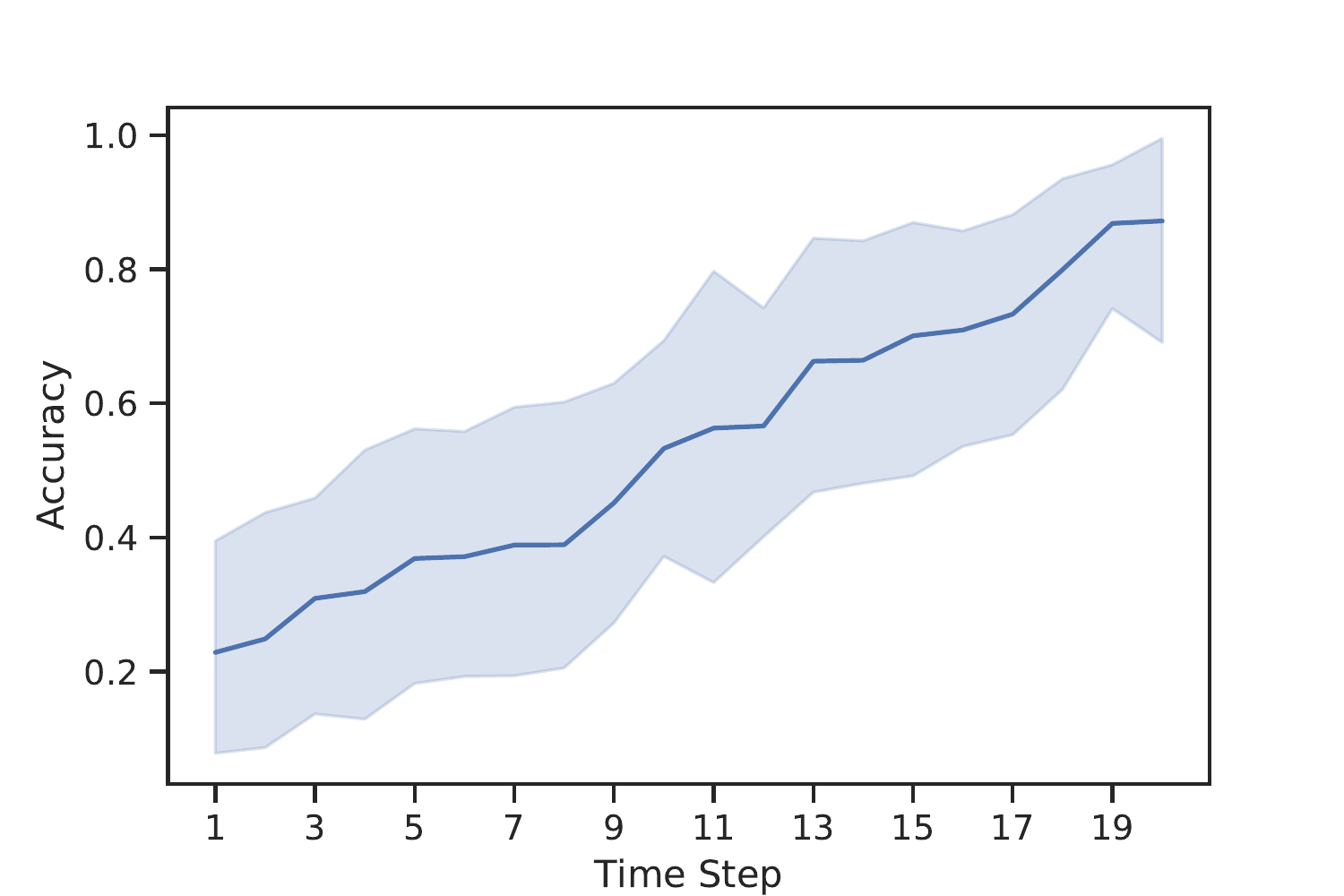}
    \caption{Average prediction accuracy of ToM model at each time step during evaluation. (95\% confidence interval)}
    \label{fig:acc}
\end{figure}
However, does ToM model truly learn to adapt to individual listeners? We compute the accuracy of predicting the listener's behavior during the same session. 
Fig.~\ref{fig:acc} shows that the prediction accuracy of listener's actions is significantly improved within sessions, which shows ToM indeed learns to adapt to individual test listeners. 

\section{ALFWorld Task Navigation}
In the previous section, we have shown that ToM could help games with simple dynamics, and learn to adapt to listeners. This section will show its application in a more complex game: language navigation. 

\begin{figure}[!t]
    \centering
    \includegraphics[width=0.8\linewidth, trim=0 5 0 0, clip]{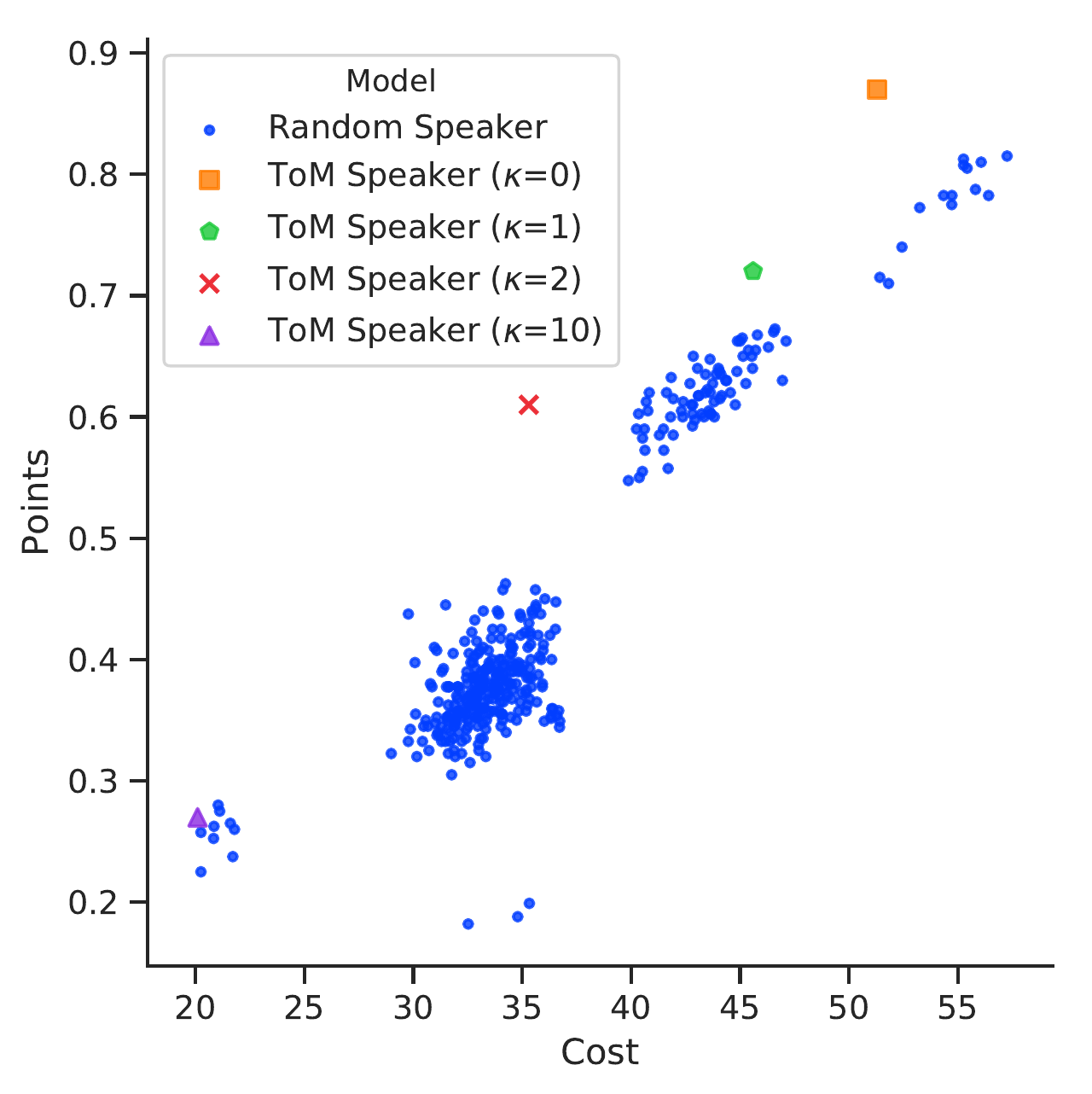}
    \caption{Experimental results for the language navigation setting with average instruction length on the horizontal-axis and game points on the vertical. Colors represent different models. 
    }
    \label{fig:language_navigation}
\end{figure}

We use the Alfworld \cite{shridhar2020alfworld} platform, which creates a natural language command action space upon the Alfred environment for vision-language navigation \cite{ALFRED20}. In each game, a household task is given by the environment. We generate expert trajectories for all tasks in Alfworld, and manually create four levels of instructions, from task-level to action-level, which are denoted as $\mathcal{I}_i, i=1, 2, 3, 4$. The differences between these four levels are the levels of abstraction. An action-level instruction corresponds to a single action in the Alfworld, while a task-level instruction corresponds to a whole trajectory which consists of more than eight commands. The two other levels are in between. The candidate pool at each time step consists of four instructions from each level ($N_M=4$). To create each listener, we draw an instruction distribution from $Dir(0.6, 0.4, 0.3, 0.2)$ for each of the six types of task in the environment. Listeners are of the same neural model as in \citet{shridhar2020alfworld}. While training the listeners, the instructions are randomly drawn from the instruction distributions according to the task type. This procedure produces 50 listeners. The average success rate of listeners is 83.6\%. These listeners are randomly divided into training, validation, and testing listeners (30/10/10). We define the cost as the average total length of the set of instructions, i.e.~repetitive instructions are only calculated once. The cost function is defined as $C(m) = 2^i \ \text{if } m \in \mathcal{I}_i$. Within one session, the maximum number of interactions between speaker and listener is $K=100$, and maximum number of interactions in a game is 20. Listeners' hyper-parameters are the same as the ones in \citet{shridhar2020alfworld}, while MAML hyper-parameters are the same as referential game. 

In Fig.~\ref{fig:language_navigation}, we compare ToM-assisted speakers and random speakers. We didn't compare with an RSA speaker, because differently from the multilingual referential games, the speaker is rule-based and no listener is used for training the speaker. A random speaker draws an instruction distribution from $Dir(0.7, 0.7, 0.7, 0.7)$, and sample instructions from the candidate pools using the instruction distribution. The ToM speaker with $\kappa=0$ predominantly uses action-level instructions, while the ToM speaker with $\kappa=10$ uses task-level instructions most of the time. Comparing ToM speakers and random speakers, we find that for $\kappa = \{0, 1, 2\}$, ToM speakers achieve higher game points and lower cost than random ones; for $\kappa=10$, the ToM speaker does not have significant improvement over a random one, since only the listeners that have been trained on sufficient action-level instructions can succeed.

\section{Related Work}

\subsection{Language Games}
Language games have been the proving ground for various linguistics theories since their conception by \citet{wittgenstein1953philosophical}. Recently, the most widely used language game is the referential game, in which the speaker observes the target and distractors and uses language to instruct the listener on how to pick out the target. 

\textbf{Emergent Communication} Without natural language annotations, this pressure for the speaker and listener enables language emergence. \citet{batali1998computational} first uses the same recurrent neural networks as the speaker and the listener to conduct emergent communication in referential game. Following this lead, \citet{lazaridou2016multi} study how emergent languages are grounded to the input images. \citet{cao2018emergent} studies multi-turn communication via negotiation. \citet{chaabouni2020compositionality,gupta-etal-2020-compositionality} study the compositionally and systematicity of emergent languages.

\textbf{Learning Common Languages} By using natural language annotations, agents learn a common language so that agents that are never trained together can be expected to communicate. \citet{lazaridou2016multi} studies using MSCOCO \cite{lin2014microsoft} annotations as gold labels for both speakers and listeners. \citet{lowe2019interaction} found that alternating between self-playing and supervised learning benefits communication performance. \citet{wang2016games} show that humans have the ability to adapt to the machine's capability in language games.  \citet{bullard2020exploring} found that when language follows Zipf law, zero-shot communication is possible. 

\textbf{Language Games with Community} \citet{tieleman2019shaping}  learns representations by training with a community of encoders and decoders. The difference between our work and theirs is that our MAML listener learns to adapt to different listeners in the population in a few games. The performance of their model should be equivalent to our model's result at time step 1. \citet{lowe2019learning} considers the adaptation problem, which is definitely relevant. However, adapting their model to our settings is non-trivial, which is beyond the scope of this paper. 

This referential game setting used in most previous work can be seen as a special case of our few-shot coordination formulation, where the number of games is one and the partners are mostly the same ones as in the training phase. These two differences prevent the previous models from learning to adapt due to the lack of pressure to do so. 
\subsection{Machine Theory of Mind}
Computational and neural models of theory-of-mind have been studied for decades \cite{siegal2002neural,rescorla2015computational}. \citet{rabinowitz2018machine} are the first to present a successful modeling of the mental state of various species of agent. While we also train a ToM model with meta learning, we put the ToM model into use. The predictions provided by the ToM model serve as reranker in the speaker's model. The variety of models is also more diverse than the species used in this paper. Importantly, \citet{nematzadeh2018evaluating,le2019revisiting} find that neural models for question answering fail to keep track of inconsistent states of the world.  \citet{moreno2021neural}  extends machine ToM to neural recursive belief states. We expect improvement over our current model by modeling higher-order recursive belief, which is left for future work. \citet{yuan2020emergence} explicitly trains belief state prediction with supervised learning, while our model's belief state is latent. 

\subsection{Similar Topics in Reinforcement Learning}

\textbf{Model-based Reinforcement Learning} Model-based reinforcement learning focuses on building a model of the environment to improve data efficiency \cite{kaelbling1996reinforcement}, which could be applied to the zero-shot coordination problem by treating the listener as a part of the environment. Recently, neural networks have been used widely for model-based RL \cite{gal2016improving, depeweg2016learning, nagabandi2018neural,chua2018deep,janner2019trust}. We should point out that despite their similarities to our model, 
we focus on modeling different and unseen agents in the population within a few interactions.

\textbf{Alternatives to Self-play} 
Zero-shot coordination has attracted much attention recently. \citet{hu2020other} propose other-play which maximizes the expected reward working with random partners. In contrast to their approach, we explicitly model the partner's ToM and focus on language coordination, which required more complicated modeling than the environments in their experiments. 

\section{Implications and Future Work}
We have introduced few-shot language coordination task and proposed ToM model for tracking the listener's mental state. Different from previous work using single-round games and self-play training, we consider more general multi-round games and playing with novel listeners. ToM model shows its ability to adapt to novel listeners and assist speakers in choosing the best instructions in both multilingual referential games and the language navigation task. We attribute the success of ToM model to modeling socio-pragmatics process in an explicit way. Many interesting questions about modeling ToM in few-shot language coordination remain open. The most immediate is how to model ToM from the listener's perspective, leading to a dialog agent that can acquire new knowledge through conversation. One step further, similar to RSA, ToM can also be modeled in a recursive manner, which may further improve language games. 

\section*{Acknowledgements}
This work was supported by the DARPA GAILA project (award HR00111990063). The views and conclusions contained in this document are those of the authors and should
not be interpreted as representing the official policies, either expressed or implied, of the U.S. Government. The U.S. Government is authorized to reproduce and distribute reprints for Government purposes notwithstanding any copyright notation here on. 
\bibliography{example_paper}
\bibliographystyle{icml2021}
\clearpage
\appendix
\section{Formal Version of Theorem \ref{th:1}}
\begin{theorem}
In one epoch of Proc.~\ref{proc:training_tom}, if the ToM model is $\epsilon$-optimal, i.e.
$$\mathcal{L}^{\text{pred}} = \mathbb{E}_{s,m} KL[\mathcal{P}_{\text{ToM}}(a\mid m, s; \theta)\| P_{l_i}(a\mid o, m)] < \epsilon$$
where states $s = \langle i, k, \mathcal{D}_{\text{supp}}, o, m, g\rangle$ and instructions $m$ are sampled as Proc. \ref{proc:training_tom},
and for almost all states $s$ speaker gives a $\delta$-optimal instruction candidates pool $M$, i.e.
$$\sum_{m\in M} \mathcal{P}_{\text{ToM}}(a^g\mid m, s; \theta) \geq \delta$$
then expected KL-divergence 
\begin{equation}
    \mathbb{E}_s KL[\mathcal{Q}_{\text{ToM}}(m\mid s)\|\mathcal{Q}(m\mid s; \theta)]
\end{equation}
between the instruction distribution calculated from ToM model
\begin{equation}
\label{eq:th1_result}
    \mathcal{Q}_{\text{ToM}}(m\mid s; \theta) \triangleq
    \frac{\mathcal{P}_{\text{ToM}}(a^g\mid m, s; \theta)}{\sum_{m'\in M}\mathcal{P}_{\text{ToM}}(a^g\mid m', s; \theta)}
\end{equation}
and the target instruction distribution
\begin{equation}
    \mathcal{Q}(m \mid s) \triangleq
    \frac{P_{l_i}(a^g\mid o, m)}{\sum_{m'\in M}P_{l_i}(a^g\mid o, m')}
\end{equation}
upper-bounded by
\begin{equation}
      \frac{N_M\sqrt{\frac{\epsilon}{2(1-\sigma)}} + W_0(\epsilon)}{\delta}
\end{equation}
where $N_M$ is the size of largest pool of instruction candidates produced by the speaker, and $W_0$ is the principle branch of Lambert's W function.  
\end{theorem}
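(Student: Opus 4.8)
The plan is to work one state $s$ at a time, reduce the instruction-level KL to a divergence between the \emph{unnormalized} gold-action weight vectors over the candidate pool, and then control that divergence with Pinsker's inequality for its smooth part and with a Lambert-$W$ optimization for its singular part. Fix $s$ and write $p_m = \mathcal{P}_{\text{ToM}}(a^g\mid m,s;\theta)$ and $q_m = P_{l_i}(a^g\mid o,m)$ for $m\in M$, with normalizers $Z_p=\sum_m p_m$ and $Z_q=\sum_m q_m$; then $\mathcal{Q}_{\text{ToM}}(\cdot\mid s)$ and $\mathcal{Q}(\cdot\mid s)$ are exactly the normalizations of $p$ and $q$, and the $\delta$-optimality hypothesis is precisely $Z_p\ge\delta$. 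Expanding gives the exact identity $KL[\mathcal{Q}_{\text{ToM}}\|\mathcal{Q}] = \frac{1}{Z_p}\sum_m p_m\log\frac{p_m}{q_m} + \log\frac{Z_q}{Z_p}$, which I split as $A+B$ with $A$ the first (potentially singular) term and $B=\log(Z_q/Z_p)$ the normalizer term.

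For $B$ I would use $\log x\le x-1$ to get $B\le (Z_q-Z_p)/Z_p\le \frac{1}{\delta}\sum_m|p_m-q_m|$. Since $a^g$ is a single action, Pinsker applied to the full action distributions yields $|p_m-q_m|\le\sqrt{KL_{sm}/2}$ with $KL_{sm}=KL[\mathcal{P}_{\text{ToM}}(\cdot\mid m,s)\|P_{l_i}(\cdot\mid m)]$. Because the instruction is drawn from $\sigma\mathcal{Q}(M)+(1-\sigma)\mathcal{U}(M)$, every pool element carries sampling weight at least $(1-\sigma)/N_M$ under $\mathcal{L}^{\text{pred}}$, so the $\epsilon$-optimality gives $\mathbb{E}_s\frac{1}{|M|}\sum_m KL_{sm}\le \epsilon/(1-\sigma)$. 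Combining this with Pinsker and the concavity of $\sqrt{\cdot}$ (Jensen), and pulling out the factor $|M|\le N_M$, produces $\mathbb{E}_s\,B\le \frac{1}{\delta}N_M\sqrt{\epsilon/(2(1-\sigma))}$, i.e.\ the first summand of the bound.

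The remaining term $A$ is the crux and I expect it to be the main obstacle. It is benign when $p_m\approx q_m$, but becomes singular whenever the true listener assigns near-zero probability to $a^g$ for some candidate while the ToM model does not; there $\log(p_m/q_m)$ is no longer controlled by $|p_m-q_m|$ alone and one must fall back on the action-level KL budget. Bounding the aggregate of these singular contributions is delicate: the crude per-candidate estimate $p_m\log(p_m/q_m)\le KL_{sm}$ (binary data-processing) summed over the pool loses a factor $N_M$, so instead I would maximize the $x\log(1/x)$-type expression governing $\sum_m p_m\log(p_m/q_m)$ directly against the $\epsilon$ budget on $\mathcal{L}^{\text{pred}}$. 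The stationarity condition of that constrained optimization is transcendental, and its optimal value is precisely the principal branch of Lambert's $W$ evaluated at $\epsilon$, giving $\mathbb{E}_s\,A\le W_0(\epsilon)/\delta$. Adding the two contributions yields the stated bound; the entire difficulty lies in isolating the singular candidates cleanly and showing their penalty collapses to $W_0(\epsilon)$ rather than the looser $N_M$-dependent estimate.
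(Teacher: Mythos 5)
Your setup and your treatment of the normalizer term $B$ coincide with the paper's own proof: the paper performs the same decomposition of the instruction-level KL into $\log(Z_q/Z_p)$ plus the weighted log-ratio sum, bounds the former via $\sum_m|p_m-q_m|$, and controls $\mathbb{E}_s\mathbb{E}_{m\sim\mathcal{U}(M)}|p_m-q_m|$ by exactly your chain --- Pinsker at the action level, the observation that the sampling distribution $\sigma\mathcal{Q}(M)+(1-\sigma)\mathcal{U}(M)$ puts mass at least $(1-\sigma)/|M|$ on every candidate, and Jensen --- arriving at the same $\frac{N_M}{\delta}\sqrt{\epsilon/(2(1-\sigma))}$ contribution. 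Up to that point your proposal is essentially the paper's argument.

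The genuine gap is in term $A$, which is the only place the Lambert function can enter, and your proposal does not prove it. Saying that one should ``maximize the $x\log(1/x)$-type expression directly against the $\epsilon$ budget'' and that ``its optimal value is precisely the principal branch of Lambert's $W$ evaluated at $\epsilon$'' asserts the desired conclusion rather than deriving it: you never write down the optimization problem (objective, constraint set, variables), never exhibit or solve the transcendental stationarity condition, and never show how the per-candidate contributions $p_m\log(p_m/q_m)$ aggregate over the pool and over $\mathbb{E}_s$ without incurring the factor $N_M$ that you yourself identify as the obstacle. The paper, for comparison, disposes of this term by a pointwise bound, $p_m\log(p_m/q_m)\le W_0\bigl(KL[\mathcal{P}_{\text{ToM}}(a\mid m,s;\theta)\,\|\,P_{l_i}(a\mid o,m)]\bigr)$ for each candidate $m$, and then takes expectations; whatever one thinks of that step, it is a concrete inequality, whereas your write-up leaves the corresponding step as a promissory note. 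Moreover, your fallback claim that ``binary data-processing'' yields $p_m\log(p_m/q_m)\le KL_{sm}$ is false as stated: data processing controls the full binary divergence $p_m\log\frac{p_m}{q_m}+(1-p_m)\log\frac{1-p_m}{1-q_m}$, whose second term can be negative. For instance, with two actions and $p_m=\tfrac12$, $q_m=\tfrac14$, one has $p_m\log(p_m/q_m)=\tfrac12\log 2\approx 0.347$ while $KL_{sm}=\tfrac12\log\tfrac43\approx 0.144$. So the singular part of $A$ is genuinely uncontrolled in your proposal, and without it the stated bound $\bigl(N_M\sqrt{\epsilon/(2(1-\sigma))}+W_0(\epsilon)\bigr)/\delta$ is not established.
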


\begin{proof}
Applying Pinsker inequality, 
\begin{equation}
\begin{aligned}
    \mathcal{L}^{\text{pred}} &= \mathbb{E}_{s,m} KL[\mathcal{P}_{\text{ToM}}(a\mid m, s; \theta)\| P_{l_i}(a\mid o, m)]\\ 
    &\geq \mathbb{E}_{s,m} 2TV(P_{l_i}(a\mid o, m), \mathcal{P}_{\text{ToM}}(a\mid m, s; \theta))^2 \\
    &= \mathbb{E}_{s,m} 2\sup_a |P_{l_i}(a\mid o, m) - \mathcal{P}_{\text{ToM}}(a\mid m, s; \theta)|^2 \\
    &\geq \mathbb{E}_{s,m} 2|P_{l_i}(a^g\mid o, m) - \mathcal{P}_{\text{ToM}}(a^g\mid m, s; \theta)|^2\\
    &\geq \mathbb{E}_{s,m} 2|\Delta(s, m))|^2 \\
    &\geq 2(1-\sigma)\mathbb{E}_{s}\mathbb{E}_{m\sim \mathcal{U}(M)} |\Delta(s, m))|^2 \\
    &\geq 2(1-\sigma)(\mathbb{E}_{s}\mathbb{E}_{m\sim \mathcal{U}(M)} |\Delta(s, m))|)^2
\end{aligned}
\end{equation}
where $\Delta(s, m) = P_{l_i}(a^g\mid o, m) - \mathcal{P}_{\text{ToM}}(a^g\mid m, s; \theta)$.

By processing the target expectation
\begin{equation}
\begin{aligned}
    &\mathbb{E}_s KL[\mathcal{Q}_{\text{ToM}}(m\mid s; \theta)\| \mathcal{Q}(m\mid s)]\\
    =&\mathbb{E}_s \log \frac{\sum_{m'\in M}P_{l_i}(a^g\mid o, m')}{\sum_{m'\in M}\mathcal{P}_{\text{ToM}}(a^g\mid m', s; \theta)}
    \\
    & + \mathbb{E}_s \frac{\sum_{m\in M}\log\frac{\mathcal{P}_{\text{ToM}}(a^g\mid m, s; \theta)}{P_{l_i}(a^g\mid o, m)}\mathcal{P}_{\text{ToM}}(a^g\mid m, s; \theta)}{\sum_{m'\in M}\mathcal{P}_{\text{ToM}}(a^g\mid m', s; \theta)} \\
    \leq & \frac{N_M}{\delta} \mathbb{E}_s\mathbb{E}_m\Delta(s, m') \\
    &+ \mathbb{E}_s \frac{\sum_{m\in M}W_0(KL[\mathcal{P}_{\text{ToM}}(a\mid m, s; \theta)\| P_{l_i}(a\mid o, m)])}{\delta} \\
    =& \frac{N_M\sqrt{\frac{\epsilon}{2(1-\sigma)}} + W_0(\epsilon)}{\delta}
\end{aligned}
\end{equation}
\end{proof}
\section{Training Time and space}
All of our models can be trained on a 32 Gb V100. A model (speaker, listener, or ToM model) for referential game trains for about 20 hours, while a model (speaker, listener, or ToM model) for language navigation trains for 72 about hours. Tab. \ref{wrap-tab:referential-game} and Fig. \ref{fig:language_navigation} reports the average of three runs, Fig. \ref{fig:acc} reports data from 20 testing listeners. 

\begin{table}
\centering
\footnotesize
\vspace{-10pt}
\begin{tabular}{@{}l@{\hspace{-30pt}}r@{}}\\\toprule  
Model & Ave success (\%)\\\midrule
Gold-standard speaker &91.20\\ 
\midrule
Non-ToM speaker &37.38\\  
RSA w/ single listener  & 39.32 \\ 
RSA speaker & 42.83\\ 
Finetuned RSA & 44.30 \\ 
\midrule
ToM. speaker (\begin{scriptsize}large $h\!=\!768$\end{scriptsize}) &55.28\\ 
ToM. speaker (\begin{scriptsize}small $h\!=\!256$\end{scriptsize}) &56.75\\ 
ToM. speaker ($N_{\text{inner}}\!=\!1$)  &56.10\\
ToM. speaker ($N_{\text{inner}}\!=\!10$) &58.25\\ 
ToM. speaker &\textbf{58.19}\\  \bottomrule
\end{tabular}
\caption{The influence of various hyperparameters}
\label{fig:appendix_hyperparameters}
\end{table}

\section{Hyper-parameter Tuning}
We only tuned the inner and outer learning rates of MAML among $1e^i, i= -1, -2, -3, -4, -5$. A few influential hyperparameters are shown in Tab. \ref{fig:appendix_hyperparameters}. Other parameters are all kept same as previous work: \citet{lowe2019interaction} for referential game, and \citet{shridhar2020alfworld} for language navigation.
\end{document}